\documentclass{article}

\usepackage{PRIMEarxiv}

\usepackage[T1]{fontenc}
\usepackage[utf8]{inputenc}

\usepackage{amsmath,amssymb,amsfonts,mathtools,amsthm}
\usepackage{array}
\usepackage{booktabs}
\usepackage{enumitem}
\usepackage{graphicx}
\usepackage{microtype}
\usepackage{multirow}
\usepackage{tabularx}
\usepackage{makecell}
\usepackage{pifont}
\usepackage{xcolor}
\usepackage{url}
\usepackage{natbib}
\usepackage{hyperref}

\definecolor{darkblue}{rgb}{0, 0, 0.5}
\definecolor{paperteal}{RGB}{48,121,137}
\definecolor{paperteallight}{RGB}{235,244,246}
\definecolor{papergraylight}{RGB}{244,244,244}

\hypersetup{
  colorlinks=true,
  linkcolor=darkblue,
  urlcolor=darkblue,
  citecolor=darkblue,
  pdftitle={Contextual Multi-Objective Optimization: Rethinking Objectives in Frontier AI Systems}
}

\newtheorem{definition}{Definition}

\newtheorem{proposition}{Proposition}

\title{Contextual Multi-Objective Optimization: Rethinking Objectives in Frontier AI Systems}

\author{
  Jie Zhou$^{1}$, Qin Chen$^1$, Liang He$^1$ \\
  $^1$ School of Computer Science and Technology, East China Normal University, Shanghai\\
  \texttt{\{jzhou, qchen, lhe\}@cs.ecnu.edu.cn}
}

\begin{document}
\maketitle

\begin{abstract}
Frontier AI systems perform best in settings with clear, stable, and verifiable objectives, such as code generation, mathematical reasoning, games, and unit-test-driven tasks. They remain less reliable in open-ended settings, including scientific assistance, long-horizon agents, high-stakes advice, personalization, and tool use, where the relevant objective is ambiguous, context-dependent, delayed, or only partially observable. We argue that many such failures are not merely failures of scale or capability, but failures of objective selection: the system optimizes a locally visible signal while missing which objectives should govern the interaction.

We formulate this problem as \emph{contextual multi-objective optimization}. In this setting, systems must consider multiple, context-dependent objectives, such as helpfulness, truthfulness, safety, privacy, calibration, non-manipulation, user preference, reversibility, and stakeholder impact, while determining which objectives are active, which are soft preferences, and which must function as hard or quasi-hard constraints. These examples are not intended as an exhaustive taxonomy: different domains and deployment settings may activate different objective dimensions and different conflict-resolution procedures. We distinguish approximate objective reducibility, where scalar optimization is empirically justified, from objective irreducibility, where scalar rewards fail to preserve context-dependent constraints, stakeholder scope, or legitimate trade-offs.

Our framework models AI behavior as a context-dependent choice rule over candidate actions, objective estimates, active constraints, stakeholders, uncertainty, and conflict-resolution procedures. This view treats clarification, refusal, uncertainty disclosure, confirmation, escalation, tool use, and abstention as endogenous action classes rather than interface afterthoughts. We outline an implementation pathway based on decomposed objective representations, context-to-objective routing, hierarchical constraints, deliberative policy reasoning, controlled personalization, tool-use control, diagnostic evaluation, auditing, and post-deployment revision.
\end{abstract}

\section{Introduction}
\label{sec:introduction}

Modern AI systems have made remarkable progress on tasks with clear, stable, and verifiable objectives. Large language models benefit from next-token prediction at scale; game-playing agents optimize explicit win conditions; code models can be evaluated by executable tests; and mathematical reasoning systems can often be checked by final-answer correctness or formal verification. In such domains, the optimization target is relatively well specified, feedback is frequent or scalable, and success can be directly measured. This paradigm has enabled modern systems to achieve strong performance and, in some cases, match or exceed human-level capability in code generation, mathematical reasoning, and game playing \citep{vaswani2017attention,brown2020language,kaplan2020scaling,hoffmann2022training}.

However, the same paradigm remains much less reliable in open-ended settings where the objective is ambiguous, context-dependent, delayed, or only partially observable. Frontier AI systems are increasingly expected to support scientific research, operate long-horizon agents, provide high-stakes advice, personalize assistance, use external tools, and interact with multiple stakeholders. In these settings, the central difficulty is often not generating a plausible response, but determining what kind of behavior is appropriate. A system may produce an answer that is fluent but unsafe, preferred but false, personalized but privacy-violating, efficient but poorly calibrated, or locally useful while imposing risks on absent third parties. Many deployed failures therefore arise not because the model lacks the ability to optimize, but because it optimizes the wrong objective for the situation.

We argue that this gap reflects a deeper problem: many frontier AI tasks are instances of \emph{contextual multi-objective optimization}. In such tasks, systems must jointly consider helpfulness, truthfulness, safety, privacy, calibration, non-manipulation, user preference, reversibility, and the interests of affected stakeholders. Which objectives are active, how they should be prioritized, and whether they may be traded off depend on the context. Existing alignment and optimization methods often compress these requirements into a scalar reward, a preference score, or an averaged human judgment. This reduction can work when objectives are measurable, stable, commensurable, and legitimately tradeable. But it fails when objectives conflict, when constraints activate only in certain contexts, or when some requirements must be treated as non-tradeable constraints. A private-data violation should not become acceptable because the response is helpful; a false answer should not be reinforced because it is reassuring; and harmful operational guidance should not be supplied because it satisfies the immediate user.

We use the term \emph{contextual multi-objective optimization} not to denote only vector-reward optimization, but to denote a broader choice-rule problem in which the system must identify the applicable objective structure before optimizing within it. The novelty therefore lies not in replacing scalar rewards with vector rewards alone, but in modeling objective activation, constraint status, stakeholder scope, uncertainty, and action-class selection as part of the decision problem.

This paper studies contextual multi-objective optimization as a formal problem for frontier AI systems. We distinguish tasks that are approximately reducible to scalar optimization from tasks that are \emph{objective-irreducible}, where a single scalar metric cannot faithfully represent the relevant constraints, stakeholders, uncertainty, or legitimate trade-offs. We formulate the problem as a choice rule over contexts, candidate actions, objective estimates, active constraints, stakeholders, uncertainty, and conflict-resolution procedures. This view allows scalar optimization as a special case, but makes explicit the assumptions under which it is valid.

Our formulation also clarifies why stronger reward models alone are insufficient. A frontier AI system must not only estimate how useful, safe, truthful, or preferred an action is; it must also determine which objectives matter in the current context, which constraints override preferences, and what class of action is appropriate. In many cases, the correct behavior is not to answer directly, but to ask for clarification, refuse, disclose uncertainty, request confirmation, preserve user agency, or escalate. These behaviors should be treated as policy actions induced by the objective structure, rather than as conversational imperfections.

The paper makes the following contributions.
\begin{itemize}[leftmargin=1.4em,itemsep=0.25em]
    \item We identify \emph{objective selection failure} as a distinct failure mode of frontier AI systems, separate from capability failure and proxy-estimation error.
    \item We formalize contextual multi-objective optimization as a choice-rule problem in which active objectives, constraints, stakeholders, uncertainty, and action classes depend on context.
    \item We introduce approximate objective reducibility and empirical objective irreducibility to clarify when scalar optimization is justified and when it breaks down.
    \item We propose an implementation pathway that connects decomposed objective modeling, context-to-objective routing, hierarchical constraints, deliberative reasoning, tool-use control, personalization control, diagnostic evaluation, and post-deployment revision.
\end{itemize}

\begin{table}[t]
\centering
\small
\begin{tabularx}{\linewidth}{p{0.10\linewidth}p{0.15\linewidth}p{0.25\linewidth}X}
\toprule
Existing approach & Main optimization signal & Why it works & Why it is insufficient for frontier systems \\
\midrule
Pretraining & Next-token prediction over large-scale data & Learns broad linguistic, factual, and reasoning patterns from scalable data. & The objective is predictive rather than normative: it teaches what text is likely, not which behavior is appropriate, safe, or contextually justified. \\
\addlinespace
Supervised fine-tuning & Imitation of curated demonstrations & Improves instruction following and encodes desirable response styles through human-written examples. & It depends on the coverage of demonstrations and cannot reliably specify how to resolve novel conflicts among truthfulness, safety, privacy, and user preference. \\
\addlinespace
RLHF and preference optimization & Scalar reward or preference score from human judgments & Aligns outputs with perceived helpfulness and user satisfaction, often improving interaction quality. & It compresses heterogeneous objectives into a single score and may reward answers that are preferred, fluent, or reassuring but false, unsafe, or poorly calibrated. \\
\addlinespace
Verifier- or outcome-based RL & Task-specific correctness signals, tests, or formal checks & Highly effective when success is clear and directly verifiable, such as code, mathematics, or games. & Many real-world tasks have delayed, ambiguous, or partially observable consequences, so the relevant objective cannot be reduced to immediate correctness or task completion. \\
\addlinespace
Agentic optimization & Long-horizon task success, efficiency, or tool-use outcomes & Enables systems to plan, use tools, and complete complex multi-step tasks. & Greater autonomy amplifies objective-selection failures: systems may optimize efficiency while ignoring reversibility, consent, third-party impact, or escalation requirements. \\
\bottomrule
\end{tabularx}
\caption{Limitations of existing optimization approaches for contextual multi-objective optimization in open-ended frontier AI systems. Representative approaches include large-scale pretraining and scaling \citep{vaswani2017attention,brown2020language,kaplan2020scaling,hoffmann2022training}, RLHF and preference optimization \citep{christiano2017deep,ouyang2022training,rafailov2023direct,azar2024general}, constitutional and AI-feedback methods \citep{bai2022constitutional}, and scalable oversight or verifier-style approaches for difficult-to-evaluate tasks \citep{irving2018ai,burns2023weak}.}
\label{tab:existing_limitations}
\end{table}

\section{Background}
\label{sec:background}

\subsection{Optimization Objectives in Current Frontier AI}

The progress of frontier AI systems has been closely tied to the design of optimization objectives. Pretraining optimizes next-token prediction over large-scale corpora, a paradigm enabled by the Transformer architecture and later scaled to increasingly capable language models \citep{vaswani2017attention,brown2020language,kaplan2020scaling,hoffmann2022training}. Supervised fine-tuning (SFT) teaches models to imitate curated demonstrations, while reinforcement-learning-based methods, including RLHF, RLAIF, constitutional training, verifier-based reinforcement learning, and preference optimization, further shape model behavior according to human judgments, AI feedback, explicit rules, or task-specific outcome signals \citep{christiano2017deep,ouyang2022training,bai2022constitutional,rafailov2023direct,azar2024general}. These methods have been highly successful in settings where success is relatively clear, stable, and verifiable, such as code generation, mathematical reasoning, game playing, and unit-test-driven problem solving.

However, these methods also share a structural assumption: desirable behavior can be represented, either explicitly or implicitly, by a relatively fixed optimization signal. This signal may be a demonstration distribution, a scalar reward, a preference score, a rule-based evaluator, or a task-specific verifier. The assumption is often reasonable when the task has a stable objective and immediate feedback. It becomes much weaker when the system must act in open-ended, high-stakes, personalized, or agentic settings, where the relevant objectives are ambiguous, context-dependent, delayed, or only partially observable. Evaluation work such as HELM and TruthfulQA has similarly shown that model quality cannot be adequately captured by a single aggregate score, because robustness, calibration, toxicity, truthfulness, fairness, and efficiency can vary independently \citep{liang2022holistic,lin2021truthfulqa}. Table~\ref{tab:existing_limitations} summarizes the main advantages and limitations of existing optimization approaches.

\subsection{From Scalar Optimization to Contextual Multi-Objective Optimization}

The limitation of existing methods is not that scalar objectives are always wrong. Rather, scalar optimization is valid only under specific assumptions: the relevant objectives must be measurable, commensurable, stable across contexts, and legitimately tradeable. Classical multi-objective optimization and multi-objective reinforcement learning provide formal tools for vector rewards, Pareto fronts, scalarization, and preference-conditioned policies \citep{miettinen1999nonlinear,roijers2013survey,hayes2022practical,qiu2024traversing}. These tools are valuable once the objective dimensions and decision procedure are specified. The difficulty in frontier AI deployment is often prior to that stage: determining what the relevant objectives are, who authorizes them, which objectives are soft preferences, and which must operate as hard or quasi-hard constraints.

In well-defined tasks, scalar or outcome-based optimization may be a good approximation. A code model can be optimized against executable tests, a game-playing agent can optimize a win condition, and a mathematical reasoning model can often be evaluated by final-answer correctness or formal verification. In such settings, the objective is relatively clear, feedback is available, and optimization pressure is directed toward a stable target. Open-ended frontier AI systems violate these assumptions in systematic ways. A single user request may instantiate different objective profiles depending on context. The same question can be a benign information request, a privacy-sensitive request, a medical-style advice request, or a high-risk tool-use request. In these settings, the central challenge is not merely to produce a plausible answer, but to determine what kind of objective problem the system is facing.

A response may be helpful but false, preferred but unsafe, personalized but privacy-violating, efficient but irreversible, or satisfying to the immediate user while harmful to absent stakeholders. Preference learning and RLHF recognize that desired behavior is difficult to fully specify by hand \citep{ng2000algorithms,christiano2017deep,ouyang2022training}. Yet in many implementations, heterogeneous reasons for preference are compressed into a single reward or comparison signal. A response may be preferred because it is more truthful, more cautious, more polite, shorter, or more aligned with annotator expectations. If that reason structure is not preserved, the system may learn average preference satisfaction without learning how to resolve conflicts among truthfulness, safety, privacy, and stakeholder impact. Thus, improving a scalar reward model or scaling model capability does not by itself solve the deeper problem of objective selection.

Current AI optimization lacks an explicit mechanism for objective-structure identification. Pretraining, SFT, RLHF, preference optimization, and verifier-based training can improve capability or align behavior with observed preferences, but they typically assume that the relevant objective structure has already been encoded in the training signal, reward model, rule set, or evaluator. Constitutional AI and deliberative alignment move toward more explicit principles and safety specifications, but principles still require contextual interpretation, conflict resolution, and revision when failures appear \citep{bai2022constitutional,guan2024deliberative}. As a result, the system may learn how to optimize a given signal without learning when that signal is incomplete, when a different objective should dominate, or when a hard constraint should override user preference. This gap becomes increasingly consequential as frontier systems become more capable, personalized, and agentic: a system may become more helpful while becoming more privacy-invasive, more efficient while becoming less corrigible, or more persuasive while undermining user autonomy.

We therefore frame open-ended frontier AI behavior as \emph{contextual multi-objective optimization}. In this view, an AI system must select an action by considering multiple objectives whose relevance, priority, and constraint status depend on the interaction context. The action space is not limited to direct answers. It may also include asking for clarification, refusing, disclosing uncertainty, requesting confirmation, escalating to human review, using a tool, or taking no action. These behaviors should not be treated as peripheral interface choices; they are part of the optimization problem because they determine how the system acts when objectives conflict or when direct answering is inappropriate.

Informally, let $c$ denote an interaction context, $\mathcal{A}(c)$ the set of candidate actions, and $\mathbf{L}(a,c)$ a vector of objective-relevant estimates for action $a$, such as helpfulness, truthfulness, safety risk, privacy risk, uncertainty, user preference, reversibility, calibration, or third-party impact. A vector of scores alone does not specify what should be done. A system also needs a context-dependent choice rule that determines which objectives are active, which constraints override preferences, whose interests count, and how conflicts should be resolved. This is the core departure from ordinary scalar optimization: the system must not only optimize over actions, but also identify the objective structure that governs the current context.

\subsection{Positioning Relative to Existing Frameworks}

Contextual multi-objective optimization builds on, but differs from, several existing frameworks. Unlike standard multi-objective optimization and multi-objective reinforcement learning, it does not assume that the objective dimensions and scalarization or selection procedure are already specified. Unlike constrained reinforcement learning or safe reinforcement learning, it treats constraint activation itself as context-dependent, uncertain, and partly normative. Unlike ordinary preference learning, it preserves reasons for preference and distinguishes immediate user preference from constraints associated with truthfulness, privacy, safety, legality, autonomy, or third-party protection. Unlike policy-only safety layers, it treats clarification, refusal, uncertainty disclosure, confirmation, escalation, and abstention as endogenous actions in the decision problem rather than as external filters applied after generation.

The point is not that scalar rewards, vector rewards, constraints, preferences, or rules are obsolete. Each remains useful under the right assumptions. The central claim is that open-ended frontier AI requires an additional layer of objective-structure identification: before optimizing, the system must determine which objectives matter, which objectives are soft or hard in context, whose interests count, what uncertainty remains, and what action classes are admissible.

\subsection{Why Contextual Multi-Objective Optimization Is Hard}

Contextual multi-objective optimization is difficult because frontier AI objectives are not merely numerous; they differ in type, authority, observability, and temporal structure. First, many objectives are \emph{open-textured}. Terms such as helpful, safe, honest, fair, respectful, and non-manipulative do not have fixed boundaries across all domains. Helpfulness may require satisfying a request in a creative task, redirecting the user in a dangerous task, or disclosing uncertainty in a high-stakes advice setting. Fairness provides a concrete example: different formal criteria can be mutually incompatible under realistic conditions \citep{hardt2016equality,kleinberg2016inherent}. As a result, disagreement among annotators may reflect not only noise, but also expertise, cultural context, domain norms, or legitimate normative conflict. Collapsing such disagreement into an average label can remove information that the system needs in order to act appropriately.

Second, some objectives are \emph{incommensurable} or \emph{non-tradeable} in specific contexts. A response can be more personalized but less private, more persuasive but more manipulative, or more concise but less transparent. A weighted sum treats these differences as compensable, but many deployment contexts reject that assumption. A privacy violation should not become acceptable because the answer is helpful; a false answer should not be reinforced because the user finds it reassuring; and harmful operational guidance should not be provided simply because it satisfies the immediate request. Such cases require procedural structure, such as consent, review, professional norms, conservative defaults, or rights-like constraints, rather than unconstrained reward maximization. Work on AI safety and human-compatible AI has similarly emphasized that specification, oversight, and constraint design are central to avoiding unintended optimization behavior \citep{amodei2016concrete,russell2019human}.

Third, objectives are often \emph{hierarchical}. Some objectives are performance goals, such as answer quality, efficiency, and style. Others are hard or quasi-hard constraints under specified risk regimes, such as safety, privacy, legality, and non-deception. Still others are meta-objectives, such as calibration, corrigibility, auditability, and preservation of user agency. Treating all of them as weighted loss terms risks a category error: active constraints should not be compensated by improvements in soft preferences. This distinction becomes increasingly important as systems become more autonomous and act across longer time horizons. Multi-objective RL can help learn policies across trade-off surfaces, but it typically presupposes that the reward dimensions and scalarization or selection procedure have already been specified \citep{roijers2013survey,hayes2022practical,qiu2024traversing}.

Fourth, proxy objectives can fail under optimization pressure. Metrics are useful evidence, but they are not final authority. Goodhart's law states that when a measure becomes a target, it can cease to be a good measure \citep{goodhart1975problems}. Specification gaming gives concrete examples in which agents satisfy a literal reward while violating the designer's intent \citep{krakovna2020specification}. In language-model alignment, optimizing against a learned reward model can improve the proxy while degrading true human preference beyond a certain point \citep{gao2023scaling}. This problem is especially severe in frontier AI systems because the model may be capable enough to satisfy the visible metric while violating the intended norm. Therefore, metrics must be stress-tested, monitored for drift, and revised after failures rather than treated as fixed definitions of desirable behavior.

Finally, objectives are distributed across time and stakeholders. Some effects cannot be measured at the moment of interaction: a tutoring system may appear helpful while weakening long-term learning, a persuasive assistant may satisfy a user while reducing autonomy, and a medical-style answer may appear plausible to a non-expert while being dangerously wrong. Scalable oversight research addresses part of this problem by studying how humans can evaluate model behavior that is difficult to check directly \citep{irving2018ai,burns2023weak}. However, oversight still requires criteria for what counts as a good resolution. Moreover, the immediate user is not always the only affected party. A request may involve another person's data, a future user, an institutional obligation, or a public safety concern. Preference learning from the present user is therefore insufficient as a complete objective source, and social choice theory shows that aggregating plural preferences into a single collective ordering is itself a difficult problem \citep{arrow1951social,sen1970collective}.

\begin{table}[t]
\centering
\small
\begin{tabularx}{\linewidth}{p{0.28\linewidth}X}
\toprule
Challenge & Implication for contextual multi-objective optimization \\
\midrule
Open-textured objectives & Labels may reflect legitimate disagreement rather than noise; systems need context-sensitive interpretation rather than simple averaging. \\
\addlinespace
Incommensurable objectives & Some trade-offs are illegitimate; usefulness or preference should not compensate for privacy, safety, legality, or truthfulness failures in contexts where these constraints are active. \\
\addlinespace
Hierarchical objectives & Performance goals, active constraints, and meta-objectives should not be treated as flat weighted terms in a single loss. \\
\addlinespace
Proxy failure under optimization & Reward models and metrics must be treated as fallible evidence, stress-tested under distribution shift, and revised after failures. \\
\addlinespace
Delayed and unverifiable feedback & Immediate user satisfaction may not capture long-term learning, autonomy, safety, or downstream consequences. \\
\addlinespace
Distributed stakeholders & User preference alone may miss third parties, future users, institutional duties, or public safety concerns. \\
\bottomrule
\end{tabularx}
\caption{Why contextual multi-objective optimization is difficult for frontier AI systems. The challenge is not only to estimate multiple objective scores, but to determine which objectives are active, which are constraints, and how conflicts should be resolved in context.}
\label{tab:cmoo_challenges}
\end{table}

\subsection{Toward Objective Mechanics}

The need for contextual multi-objective optimization suggests a broader research perspective: understanding not only how models learn capabilities, but how objectives shape deployed behavior under competing pressures. We refer to this perspective as \emph{objective mechanics}. Objective mechanics studies how objectives are represented, activated, traded off, constrained, and revised in trained systems and deployed products. Its basic unit is not a single prompt or a single metric, but a conflict pattern: helpfulness against safety, personalization against privacy, persuasion against autonomy, speed against calibration, or user preference against third-party protection.

This framing connects capability and alignment. Capability research asks whether a model can perform a task. Alignment research asks whether it performs the intended task in the intended way under relevant constraints. Objective mechanics treats failures such as hallucination, over-refusal, privacy leakage, jailbreak susceptibility, sycophancy, and unsafe tool use as evidence about how the system resolves competing objective pressures. Red-teaming, sycophancy analysis, and jailbreak research provide empirical evidence that such failures often emerge when models face competing pressures between helpfulness, user agreement, policy boundaries, and robustness \citep{perez2022red,sharma2023sycophancy,zou2023universal}. Evaluation, interpretability, and documentation are therefore complementary tools: evaluation maps failures, interpretability probes mechanisms, and documentation records intended use and accountability assumptions \citep{liang2022holistic,lin2021truthfulqa,olah2020circuits,elhage2022toy,mitchell2019model,raji2020closing}.

The importance of this paper lies in making this objective structure explicit. Existing methods ask how AI systems can better optimize a given signal. We argue that frontier AI systems increasingly require a prior question: \emph{what should the system optimize in this context?} By formulating open-ended AI behavior as contextual multi-objective optimization, this work clarifies when scalar optimization remains a useful approximation, why it breaks down in objective-irreducible settings, and what kinds of choice rules are needed for systems that can identify active objectives, enforce non-tradeable constraints, account for affected stakeholders, disclose uncertainty, escalate when necessary, and revise objective procedures after deployment failures.

\section{Problem Formulation of Contextual Multi-Objective Optimization}
\label{sec:problem}

We formulate open-ended frontier AI behavior as a problem of \emph{contextual multi-objective optimization}. The core question is not only how to optimize a given objective, but how a system should determine which objectives are relevant, how they should be prioritized, and which of them must function as hard or quasi-hard constraints in a particular interaction context.

Let $c \in \mathcal{X}$ denote an interaction context. The context may include the user request, conversation history, domain, risk level, available tools, user state, institutional setting, and relevant external conditions. Let $\mathcal{A}(c)$ denote the set of candidate actions available to the system in context $c$. Importantly, candidate actions are not limited to direct answers. They may also include asking for clarification, refusing, disclosing uncertainty, requesting confirmation, escalating to human review, using a tool, or taking no action.

Let $\bar{\mathcal{O}}$ denote a broad universe of possible objective dimensions, and let $\mathcal{O}_{\mathrm{active}}(c)\subseteq \bar{\mathcal{O}}$ denote the set of objectives activated in context $c$. For each candidate action $a \in \mathcal{A}(c)$, the system estimates objective-relevant quantities:
\begin{equation}
    \mathbf{L}_{\mathcal{O}_{\mathrm{active}}(c)}(a,c)
    =
    \big(L_i(a,c)\big)_{i\in \mathcal{O}_{\mathrm{active}}(c)} ,
\end{equation}
where each component may correspond to helpfulness, truthfulness, safety risk, privacy risk, uncertainty, user preference, reversibility, calibration, or expected impact on third parties. These quantities describe how an action relates to multiple objectives, but they do not by themselves determine what the system should do. A decision rule must also specify which objectives are active in context, which objectives are tradeable, which constraints are non-tradeable or quasi-hard, whose interests count, and how conflicts should be resolved.

We therefore define the system's behavior through a context-dependent choice rule:
\begin{equation}
    a^*(c) \in
    G\Big(
        \mathcal{A}(c),
        \mathbf{L},
        \mathcal{C}(c),
        \mathcal{S}(c),
        \mathcal{U}(c),
        \Pi
    \Big),
\end{equation}
where $\mathcal{C}(c)$ is the set of active constraints, $\mathcal{S}(c)$ is the set of relevant stakeholders or legitimate preference sources, $\mathcal{U}(c)$ represents uncertainty about the context, objectives, or consequences, and $\Pi$ is the conflict-resolution procedure that governs how objectives and constraints are applied.

This formulation generalizes scalar optimization rather than rejecting it. In simple settings, $G$ may reduce to maximizing a scalar reward or minimizing a weighted sum of losses. In open-ended settings, however, $G$ may involve constraint filtering, abstention, escalation, user confirmation, uncertainty disclosure, or other policy actions. These behaviors are not peripheral interface details; they are part of the optimization problem because they determine how the system acts when objectives conflict or when direct answering is inappropriate.

\paragraph{When Is Scalar Optimization Sufficient?}

Scalar optimization is appropriate only when a single metric faithfully approximates the relevant context-dependent decision judgment. We make this condition explicit through approximate objective reducibility.

\begin{definition}[Approximate objective reducibility]
Let $D$ be a deployment distribution over contexts, $\mathcal{T}$ a set of stress distributions, $h_c(a)$ a human or institutional judgment over candidate actions in context $c$, and $m(a,c)$ a scalar metric or reward. A task is \emph{$(\rho,\epsilon,\delta_r,\delta_c)$-reducible} to $m$ over action class $\mathcal{A}$ if the following conditions hold:
\begin{itemize}[leftmargin=1.5em,itemsep=0.2em]
    \item \textbf{Ranking preservation:} on $D$, the context-wise ranking of candidate actions induced by $m(\cdot,c)$ agrees with the ranking induced by $h_c(\cdot)$ with average rank correlation at least $\rho$.
    \item \textbf{Constraint reliability:} the policy $\pi_m(c)\in\arg\max_{a\in\mathcal{A}(c)}m(a,c)$ violates active constraints with probability at most $\epsilon$ on $D$.
    \item \textbf{Robustness under stress:} on every stress distribution $T\in\mathcal{T}$, ranking correlation decreases by at most $\delta_r$ and constraint-violation probability increases by at most $\delta_c$.
\end{itemize}
\end{definition}

Approximate reducibility treats scalar optimization as an empirical hypothesis rather than a default assumption. A reward model, preference score, or benchmark metric may be useful on ordinary examples, while failing under distribution shift, adversarial prompting, long-horizon consequences, or changes in stakeholder context. Thus, a scalar objective is justified only when it preserves the relevant action ordering, controls constraint violations, and remains robust under stress.

\paragraph{When Does Scalar Optimization Break Down?}

Many frontier AI tasks are not well described by approximate reducibility. They involve multiple objectives that are context-dependent, partially observable, conflicting, or non-tradeable. In such cases, the failure of a scalar reward is not merely a measurement error; the deeper issue is that the task requires contextual activation and prioritization of objectives.

Objective irreducibility has both a structural and an empirical aspect. Structurally, some objectives are not legitimately compensable by gains in other objectives: for example, a privacy violation should not become acceptable merely because an answer is helpful. Empirically, even when a scalar proxy is allowed, it may fail to preserve judgments and active constraints under deployment and stress distributions. The following definition captures the empirical aspect.

\begin{definition}[Empirical objective irreducibility]
Given a class of scalar metrics $\mathcal{M}$, a task exhibits \emph{empirical objective irreducibility} over $(D,\mathcal{T})$ if no $m \in \mathcal{M}$ satisfies approximate reducibility at acceptable thresholds $(\rho,\epsilon,\delta_r,\delta_c)$, or if context-dependent non-tradeable constraints require different action classes for cases that appear equivalent under the scalar metric.
\end{definition}

Objective irreducibility captures the situations in which improving the reward model is not enough. For example, two actions may receive similar helpfulness or preference scores, but one may violate privacy, create irreversible external effects, mislead the user, or harm an absent third party. If such constraints are active only in certain contexts, then a context-blind scalar score cannot determine the correct action class.

\paragraph{Operational Judgment as the Target Behavior}

The behavioral capability required for contextual multi-objective optimization is what we call \emph{operational judgment}.

\begin{definition}[Operational judgment]
A system exhibits \emph{operational judgment} in context $c$ if it can identify the active objectives, recognize conflicts among them, apply non-tradeable or quasi-hard constraints, calibrate uncertainty, select an appropriate action class, and provide a concise rationale or escalation path when needed.
\end{definition}

This is a behavioral definition, not a claim that the system possesses human-like moral understanding. A system with operational judgment need not have human wisdom; it must reliably act as if the relevant objectives and constraints have been correctly identified, prioritized, and applied. In this sense, operational judgment is the practical target of contextual multi-objective optimization.

\paragraph{Why Context-Blind Scalarization Fails}

The preceding definitions clarify why weighted scalarization is not a neutral default. It assumes that objectives are commensurable, that trade-offs among them are legitimate, that weights remain stable across contexts, and that proxy measurements remain valid under optimization pressure. The following proposition gives a minimal failure mode.

\begin{proposition}[A minimal failure mode for context-blind scalarization]
Consider two contexts $c_1,c_2$ and two candidate actions $a,b$. Suppose a scalarization rule
\begin{equation}
    s_w(a,c)=w^\top \mathbf{L}(a,c)
\end{equation}
observes only the objective-score vector $\mathbf{L}(a,c)$. Assume that
\begin{equation}
    \mathbf{L}(a,c_1)=\mathbf{L}(a,c_2),
    \qquad
    \mathbf{L}(b,c_1)=\mathbf{L}(b,c_2).
\end{equation}
If the legitimate context-dependent procedure requires $a \succ b$ in $c_1$ but $b \succ a$ in $c_2$ because a hard or quasi-hard constraint is active only in one context, then no context-blind weighted scalarization over $\mathbf{L}$ can satisfy both requirements.
\end{proposition}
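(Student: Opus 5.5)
The argument is a direct contradiction: a context-blind rule must give identical scores in the two contexts. I will first make the notion of "satisfying the requirements" explicit. A scalarization $s_w$ satisfies the requirement $a \succ b$ in context $c$ if it ranks $a$ strictly above $b$, i.e.\ $s_w(a,c) > s_w(b,c)$; this is the reading under which the choice rule $\arg\max s_w(\cdot,c)$ selects $a$ uniquely. The key structural fact is that $s_w$ depends on $c$ only through $\mathbf{L}(\cdot,c)$, and the weight vector $w$ is fixed across contexts. This is precisely what "context-blind" means.

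The main step is the following chain. Suppose for contradiction that some $w$ satisfies both requirements. From $c_1$ we get $w^\top\mathbf{L}(a,c_1) > w^\top\mathbf{L}(b,c_1)$. From $c_2$ we get $w^\top\mathbf{L}(b,c_2) > w^\top\mathbf{L}(a,c_2)$. Substituting the hypotheses $\mathbf{L}(a,c_1)=\mathbf{L}(a,c_2)$ and $\mathbf{L}(b,c_1)=\mathbf{L}(b,c_2)$ into the second inequality gives $w^\top\mathbf{L}(b,c_1) > w^\top\mathbf{L}(a,c_1)$, which contradicts the first. Hence no $w$ works.

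I will then remark that linearity plays no role in this argument. Any function $f(\mathbf{L}(a,c))$ of the score vector alone is defeated in the same way, since $f$ returns identical values for $a$, and for $b$, in both contexts. The conclusion therefore extends to every context-blind scalarization, monotone or not. The only genuine obstacle is the case of weak preferences or ties. If "satisfy" is read as weak ranking, then $w$ with $s_w(a,\cdot)=s_w(b,\cdot)$, for instance $w=0$, would trivially "satisfy" both requirements. I will handle this by insisting on strict preference, as the statement's $\succ$ indicates. Equivalently, I will require the selected action $\arg\max$ to be $a$ in $c_1$ and $b$ in $c_2$, which a tie cannot deliver deterministically.

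Finally, I will note the interpretation. The hard or quasi-hard constraint that distinguishes $c_1$ from $c_2$ is information absent from $\mathbf{L}$, so the remedy must make $G$ depend on $\mathcal{C}(c)$ or on $c$ itself, not on the scores alone. This motivates the choice-rule formulation introduced earlier.
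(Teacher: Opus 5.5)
Your proof is correct and follows essentially the same route as the paper's: because $s_w$ sees identical score vectors for $a$ and $b$ in both contexts, it induces the same ordering in $c_1$ and $c_2$, which contradicts the required opposite strict orderings. Your explicit handling of ties through strict preference and your remark that linearity plays no role are sound refinements of the paper's observation that the failure does not depend on the choice of weights.
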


\begin{proof}
Because the scalarization rule observes identical score vectors for $a$ and $b$ in both contexts, it must assign the same scalar scores and therefore induce the same ordering between $a$ and $b$ in $c_1$ and $c_2$. The context-dependent requirements demand opposite orderings. Thus at least one context must be misclassified. The failure is not caused by an incorrect choice of weights; it arises because the scalarization rule omits context-dependent constraint activation from the decision procedure.
\end{proof}

This proposition does not imply that scalarization is impossible in principle. Rather, it shows that scalarization is adequate only after the variables governing context-dependent constraint activation have been represented. The hard part is therefore not choosing weights over a fixed vector, but deciding which contextual variables must enter the objective state and which constraints they activate.

Scalar optimization can select clarification, refusal, uncertainty disclosure, confirmation, escalation, or abstention only if the relevant context, constraints, uncertainty, and action classes have already been represented and validated. The missing step is therefore not merely reward maximization, but objective-structure identification. The central problem for frontier AI systems is to design choice rules that can determine when scalar optimization is appropriate, when constraints must override preferences, and when the correct system behavior belongs to a different action class altogether.

\begin{table}[t]
\centering
\small
\begin{tabularx}{\linewidth}{p{0.21\linewidth}p{0.27\linewidth}X}
\toprule
Layer & Function & Example Failure If Absent \\
\midrule
Objective-State Representation
& Preserve distinct objective-relevant signals before aggregation
& A helpful answer hides factuality, privacy, uncertainty, or stakeholder risk. \\
\addlinespace
Preference-Aware Objective Decomposition
& Preserve reasons behind preference judgments
& The model learns average user satisfaction without learning when preference should be overridden. \\
\addlinespace
Context-To-Objective Routing
& Activate the right objectives, constraints, stakeholders, and escalation policy
& A high-risk or privacy-sensitive request is treated as casual conversation. \\
\addlinespace
Hierarchical and Lexicographic Constraints
& Separate non-tradeable constraints from soft preferences
& Privacy, safety, legality, or non-deception failures are compensated by fluency or usefulness. \\
\addlinespace
Deliberative Policy Reasoning
& Interpret explicit principles and resolve objective conflicts
& The system complies, refuses, or escalates without a contestable reason. \\
\addlinespace
Agentic and Tool-Use Control
& Control when model outputs become external actions
& The system efficiently executes an irreversible, unauthorized, or stakeholder-harming action. \\
\addlinespace
Controlled Personalization
& Adapt to users under consent, transparency, deletion, and non-manipulation constraints
& Personalization becomes covert profiling or hidden engagement optimization. \\
\addlinespace
Diagnostic Evaluation and Revision
& Test objective selection and update policies, thresholds, metrics, and behavior after failures
& A benchmark-passing system becomes stale, brittle, or misaligned under deployment shift. \\
\bottomrule
\end{tabularx}
\caption{A pathway for implementing contextual multi-objective optimization. Existing technical routes provide useful components, while the proposed pathway specifies how these components can be composed into a context-dependent and revisable decision process.}
\label{tab:implementation_stack}
\end{table}

\section{A Pathway for Contextual Multi-Objective Optimization}
\label{sec:implementation}

The preceding sections formulate open-ended frontier AI behavior as a context-dependent choice problem over candidate actions, objective estimates, active constraints, stakeholders, uncertainty, and conflict-resolution procedures. This formulation suggests that the implementation challenge is not simply to learn a stronger scalar reward model, but to build a decision process that can preserve objective-relevant evidence, activate the right objectives in context, distinguish constraints from preferences, select appropriate action classes, and revise these procedures after failure.

Existing technical routes provide important components for this goal. Multi-objective optimization and multi-objective reinforcement learning offer tools for vector rewards, Pareto analysis, scalarization, and preference-conditioned policies \citep{miettinen1999nonlinear,roijers2013survey,hayes2022practical,qiu2024traversing}. Preference learning, RLHF, and DPO provide scalable ways to learn from human judgments \citep{ng2000algorithms,christiano2017deep,ouyang2022training,rafailov2023direct,azar2024general}. Constitutional, deliberative, and oversight approaches make principles and evaluation procedures more explicit \citep{bai2022constitutional,guan2024deliberative,irving2018ai,burns2023weak}. Evaluation, interpretability, red teaming, and documentation provide tools for testing, diagnosis, and accountability \citep{liang2022holistic,lin2021truthfulqa,perez2022red,ganguli2022red,olah2020circuits,elhage2022toy,mitchell2019model,gebru2021datasheets,raji2020closing}.

However, these methods are best understood as components rather than complete solutions. They help estimate objectives, learn preferences, apply principles, or evaluate behavior, but they do not by themselves determine which objectives should govern a particular context, which objectives are non-tradeable constraints, whose interests should count, and when the appropriate behavior is not to answer directly but to ask for clarification, refuse, disclose uncertainty, request confirmation, or escalate. We therefore describe a pathway that connects existing techniques into a context-dependent and revisable decision process.

\subsection{Objective-State Representation}

The first layer is objective-state representation. Multi-objective optimization and multi-objective reinforcement learning motivate the idea that AI behavior should not be represented only by a single scalar objective. Classical multi-objective optimization studies vector-valued objectives, Pareto optimality, scalarization, and trade-off surfaces \citep{miettinen1999nonlinear}. Multi-objective reinforcement learning extends these ideas to sequential decision-making, where agents optimize vector rewards or learn policies conditioned on preferences over objectives \citep{roijers2013survey,hayes2022practical,qiu2024traversing}. These tools are useful because they make objective multiplicity explicit. However, for open-ended frontier AI systems, the main challenge is not only to optimize over a given vector reward, but to preserve objective-relevant evidence before deciding which objectives are active, which are tradeable, and which should operate as constraints.

Instead of training only a single reward model, a system can maintain an objective-state representation for candidate actions:
\begin{equation}
    \mathbf{L}_{\mathcal{O}_{\mathrm{active}}(c)}(a,c)=\big(L_i(a,c)\big)_{i\in \mathcal{O}_{\mathrm{active}}(c)} .
\end{equation}
Here, $\mathcal{O}_{\mathrm{active}}(c)$ denotes the context-dependent set of objective dimensions relevant to interaction context $c$, and $L_i(a,c)$ estimates how candidate action $a$ relates to objective dimension $i$ in that context. The set $\mathcal{O}_{\mathrm{active}}(c)$ is not assumed to be fixed across all tasks or complete in advance. A casual creative-writing request, a high-stakes advice request, a privacy-sensitive query, and a tool-use command may activate different objective dimensions. The role of $\mathbf{L}_{\mathcal{O}_{\mathrm{active}}(c)}(a,c)$ is therefore not to provide a universal taxonomy of AI objectives, but to preserve structured evidence before aggregation, constraint filtering, or conflict resolution.

This distinction matters because premature scalarization can hide the reason structure of a judgment. A response may be preferred because it is more useful, more truthful, more cautious, more concise, less risky, or more respectful of stakeholder interests. If these reasons are collapsed into one reward score, the system may learn that an action is better without learning why it is better or when that judgment should no longer apply. Evaluation work such as HELM and TruthfulQA supports this concern by showing that model quality cannot be fully captured by a single aggregate score: truthfulness, calibration, robustness, toxicity, fairness, and efficiency may vary independently \citep{liang2022holistic,lin2021truthfulqa}. Objective-state representation preserves such distinctions so that they can be inspected and used by later stages of the choice rule.

The representation should also leave room for uncertainty about the objective structure itself. In many frontier AI interactions, the system may not know whether a request is low-stakes or high-stakes, whether a third party is affected, whether the user is asking for general information or actionable advice, or whether a constraint should be activated. Such uncertainty should not be forced into the same score as ordinary preference satisfaction. Instead, uncertainty about context and objective relevance should be represented explicitly and passed to later layers of the choice rule, where it may trigger clarification, uncertainty disclosure, confirmation, refusal, or escalation.

Thus, objective-state representation differs from ordinary vector reward modeling in its role. It is not merely a richer reward signal for optimization. It is an intermediate representation that keeps objective-relevant evidence available for contextual routing, hierarchical constraint enforcement, and conflict-resolution procedures. Scalar optimization may still be appropriate in simple or well-specified settings, but it should operate only after the system has preserved the objective structure needed to determine whether scalarization is legitimate in the current context.

\subsection{Preference-Aware Objective Decomposition}

Preference learning is central to modern alignment because many desired behaviors cannot be fully specified by hand. Inverse reinforcement learning and preference learning formalize the problem of inferring objectives from observed or elicited human judgments \citep{ng2000algorithms,christiano2017deep}. RLHF made this approach central to language-model alignment by combining supervised fine-tuning, reward modeling, and policy optimization \citep{ouyang2022training}. DPO and related methods further simplify the pipeline by optimizing directly against preference comparisons \citep{rafailov2023direct,azar2024general}.

The difficulty is that ordinary preference data often compress heterogeneous reasons into a single comparison. A response may be preferred because it is more truthful, more cautious, more concise, more polite, less harmful, or more aligned with annotator expectations. If these reasons are not preserved, the model may learn average preference satisfaction while failing under objective conflict. In particular, user or annotator preference should not always dominate truthfulness, safety, privacy, or third-party protection.

A suitable implementation should therefore enrich preference learning with reason structure. Annotation pipelines can record not only which answer is preferred, but also why it is preferred, whether the preference depends on context, whether any constraint is violated, whether uncertainty is adequately disclosed, and whether stakeholders other than the immediate user are affected. Such labels do not replace RLHF or DPO; they make these methods more suitable for contextual multi-objective optimization by preserving information that a single preference bit cannot represent.

This decomposition also helps distinguish preference disagreement from annotation noise. Disagreement may arise because annotators emphasize different objectives, possess different expertise, or apply different domain norms. Treating all disagreement as noise encourages averaging, whereas contextual multi-objective optimization requires identifying the source of disagreement and determining which reasons should matter in the present context.

\subsection{Context-To-Objective Routing}

A general assistant should not apply the same objective mixture to every prompt. A poem request, a debugging task, a medical-style question, a legal-style question, an emotional-support conversation, a privacy-sensitive request, and a tool-use instruction activate different objectives. Context-to-objective routing maps an interaction context to an objective profile:
\begin{equation}
    \rho(c) \rightarrow
    \{
    \mathcal{O}_{\mathrm{active}}(c),
    \mathcal{C}_{\mathrm{active}}(c),
    \mathcal{S}(c),
    \mathcal{U}_{\mathrm{req}}(c),
    \mathcal{E}(c)
    \},
\end{equation}
where $\mathcal{O}_{\mathrm{active}}(c)$ denotes active objectives, $\mathcal{C}_{\mathrm{active}}(c)$ active constraints, $\mathcal{S}(c)$ relevant stakeholders, $\mathcal{U}_{\mathrm{req}}(c)$ uncertainty and calibration requirements, and $\mathcal{E}(c)$ the applicable escalation or confirmation policy.

Existing moderation, classification, and policy-routing systems provide partial technical foundations for this layer. However, contextual objective routing is richer than topic classification. It should identify not only the domain of the request, but also risk level, reversibility, evidence quality, user vulnerability, tool-use implications, stakeholder scope, and conflict type. The same surface-level request may be a benign information query, a high-stakes advice request, a privacy-sensitive request involving another person, or a request that could enable harmful action.

Routing errors are high leverage. If a high-risk context is routed as low-risk, constraints may fail to activate. If a low-risk context is routed as high-risk, the system may over-refuse. Routing should therefore be evaluated as a core alignment component, not as invisible preprocessing. A system should be tested not only on final answer quality, but also on whether it activated the correct objective profile before acting.

\subsection{Hierarchical and Lexicographic Constraints}

Some objectives should be modeled as constraints rather than preferences. AI safety research has long shown that optimizing a misspecified objective can produce unintended behavior, including reward hacking, unsafe exploration, distributional shift, and negative side effects \citep{amodei2016concrete}. Specification-gaming examples show that agents can satisfy a literal reward while violating the designer's intended objective \citep{krakovna2020specification}. Goodhart's law captures the broader pattern that a proxy measure may cease to be reliable once optimized directly \citep{goodhart1975problems}. Empirical work on reward-model overoptimization similarly suggests that optimizing a learned proxy too strongly can eventually degrade true human preference \citep{gao2023scaling}.

These concerns motivate a hierarchical or lexicographic structure. Helpfulness, fluency, concision, and style can often be treated as soft preferences. Privacy, legality, non-deception, severe safety risk, consent, and third-party protection may need to function as hard or quasi-hard constraints in relevant contexts. A choice rule can first filter actions through active constraints:
\begin{equation}
    \mathcal{A}_{\mathrm{admissible}}(c)
    =
    \{a \in \mathcal{A}(c):
    C_j(a,c)=1
    \text{ for all } C_j \in \mathcal{C}_{\mathrm{active}}(c)\}.
\end{equation}
The system can then optimize softer preferences within the admissible set:
\begin{equation}
    a^*(c) \in
    \arg\max_{a \in \mathcal{A}_{\mathrm{admissible}}(c)}
    U_{\mathrm{soft}}(a,c).
\end{equation}

This structure blocks illegitimate compensation. A privacy violation should not become acceptable because the response is helpful. A misleading answer should not become acceptable because it is reassuring. A tool action that is irreversible should not become acceptable merely because it is likely to complete the user's immediate task. In practice, constraints may be probabilistic and thresholded rather than perfectly binary, but the decision procedure should preserve the distinction between constraints and preferences.

This also explains why non-answer behaviors are part of the optimization problem. If the admissible set is empty, if constraint satisfaction is uncertain, or if the system lacks enough information to determine which constraints apply, the correct action may be to ask for clarification, refuse, disclose uncertainty, request confirmation, or escalate. These behaviors should be treated as objective-guided action classes rather than conversational imperfections.

\subsection{Deliberative Policy Reasoning}

Many objective conflicts cannot be resolved by scores alone because the relevant norms are open-textured. Terms such as helpful, safe, honest, fair, respectful, manipulative, and privacy-preserving do not have fixed operational boundaries across all contexts. Constitutional AI moves toward explicit principles rather than purely implicit preference aggregation \citep{bai2022constitutional}. Deliberative alignment similarly trains models to reason over explicit safety specifications before answering \citep{guan2024deliberative}. Scalable oversight studies how humans can evaluate model behavior that is difficult to check directly \citep{irving2018ai,burns2023weak}.

These approaches naturally support contextual multi-objective optimization, but they still require conflict-resolution procedures. Principles can conflict, oversight needs criteria, and explicit rules must be interpreted, ordered, revised, and audited. A deliberative system can help by retrieving relevant policies, identifying which objectives are in tension, and explaining why a particular action class was selected. For example, it can explain why the system answered directly, asked for clarification, disclosed uncertainty, refused, requested confirmation, or escalated.

However, deliberation should not be treated as proof of alignment. Model-generated reasoning may be incomplete, post hoc, or vulnerable to prompt manipulation. It should therefore be paired with external policy retrieval, behavioral evaluation, adversarial testing, and audits of whether the stated reason corresponds to actual behavior. Its value is that it creates an inspectable interface for conflict resolution, not that it guarantees correct objective selection by itself.

\subsection{Agentic and Tool-Use Control}

As frontier systems become more agentic, contextual multi-objective optimization must extend from response generation to external action. Tool-use systems can interact with files, messages, accounts, software environments, APIs, financial resources, and other people. This changes the objective problem: a system may correctly infer the user's immediate instruction while still selecting an inappropriate action because consent, reversibility, authorization, or third-party impact was not properly considered.

Agentic and tool-use control should therefore be treated as a distinct layer. The system should distinguish planning from acting. It may be appropriate to propose a plan, simulate consequences, or provide a dry-run preview even when it would be inappropriate to execute the plan directly. Low-impact and reversible actions may require minimal friction, while high-impact, irreversible, privacy-sensitive, or stakeholder-affecting actions should require stronger checks.

Useful mechanisms include least-authority tool access, reversible-first defaults, explicit confirmation gates, action logging, rate limits, and escalation for high-impact or uncertain actions. The core principle is that increased capability should be paired with stronger objective control. In text-only settings, objective-selection failures may produce bad answers. In tool-use settings, the same failures may produce external effects. Thus, agentic systems require not only better planning algorithms, but also explicit procedures for determining when planning may become action.

\subsection{Controlled Personalization}

Personalization can improve usefulness, but it also creates privacy, autonomy, and manipulation risks. A system that remembers a user's preferred tone, formatting style, language, or workflow conventions may provide better assistance. Yet the same mechanism can become problematic if it infers sensitive traits, vulnerabilities, emotional dependencies, or behavioral patterns for hidden engagement optimization.

Controlled personalization should therefore treat user-specific adaptation as a constrained objective modifier rather than as an unconditional improvement in user satisfaction. Low-stakes preferences, such as tone, response length, formatting style, or preferred terminology, may often be used safely. Sensitive attributes, vulnerability signals, health-related information, political or religious inferences, emotional-state predictions, or high-stakes behavioral predictions require stronger consent, transparency, deletion, and limitation controls.

Documentation and auditing practices such as model cards, datasheets, and process audits provide useful analogues because they make data assumptions, intended use, limitations, and accountability structures more explicit \citep{mitchell2019model,gebru2021datasheets,raji2020closing}. For personalization, the analogous requirement is that users should be able to inspect, correct, delete, or limit persistent information that shapes system behavior. A personalized system is not better merely because it better predicts what a user will accept; it is better only if personalization remains subordinate to privacy, autonomy, non-manipulation, and user control.

\subsection{Diagnostic Evaluation and Revision}

Because this paper is a position and perspective paper, our goal is not to introduce a new benchmark as a final contribution. Nevertheless, the framework should be empirically testable. A useful diagnostic evaluation should test whether a system can select the correct objective structure, not merely whether it can produce a fluent final answer.

One simple protocol is to construct context-paired cases. Each pair shares a similar surface request but differs in context such that different objectives, constraints, stakeholders, or action classes should be activated. For example, the same request to summarize a document may be benign when the user owns the document but privacy-sensitive when it contains another person's data. The same request to write an email may call for a draft in one context but require explicit confirmation before sending in another. The same request for advice may be a low-stakes information query in one context but a high-stakes medical-style or legal-style request in another. The same tool-use instruction may be reversible in one context but irreversible or stakeholder-affecting in another.

Such cases can be evaluated along several dimensions:
\begin{itemize}[leftmargin=1.5em,itemsep=0.2em]
    \item \textbf{Objective-routing accuracy:} whether the system activates the appropriate objective profile for the context.
    \item \textbf{Constraint recall:} whether active safety, privacy, truthfulness, legality, consent, or stakeholder constraints are identified.
    \item \textbf{Action-class accuracy:} whether the system selects answer, clarification, refusal, uncertainty disclosure, confirmation, escalation, tool use, or abstention as appropriate.
    \item \textbf{Constraint-violation rate:} whether the selected action violates any active hard or quasi-hard constraint.
    \item \textbf{Over-conservatism rate:} whether the system unnecessarily refuses, escalates, or withholds help in low-risk contexts.
    \item \textbf{Stakeholder identification:} whether the system recognizes affected parties beyond the immediate user.
\end{itemize}

This diagnostic perspective directly tests the central claim of the paper. A scalar preference model may assign similar scores to two superficially similar answers, while a contextual choice rule should distinguish them because different constraints or action classes are active. The aim is not to replace existing benchmarks, but to complement them with evaluations centered on objective selection, constraint activation, and action-class choice.

Auditing should feed revision. Policies, thresholds, routing taxonomies, objective estimators, preference data, evaluation suites, and deployment procedures should be versioned and updated when new failures appear. This is necessary because proxy objectives can drift under deployment pressure. A benchmark-passing system may become stale when users change behavior, new tools are added, new risks emerge, or models become more capable at exploiting proxy metrics.

The result is not a fixed reward function, but a revisable decision process. Contextual multi-objective optimization requires mechanisms for representing objective evidence, activating context-specific objectives, enforcing constraints, resolving conflicts, controlling external actions, limiting personalization risks, and revising procedures after failure. In this sense, implementable contextual multi-objective optimization is both a model-design problem and an institutional design problem.

\section{Limitations}
\label{sec:limitations}

This paper has several limitations. First, it provides a formulation and implementation pathway rather than a complete algorithm. The proposed framework clarifies why open-ended frontier AI behavior should be treated as contextual multi-objective optimization, but many components, including objective-state representation, context routing, constraint activation, conflict resolution, and post-deployment revision, require further technical specification and empirical validation.

Second, hierarchical constraints may introduce over-conservatism. If constraints are too broad, systems may refuse benign requests or limit user agency unnecessarily. If constraints are defined too narrowly, systems may fail to activate protections in high-risk contexts. This suggests that constraints should be transparent, contestable, and revisable rather than treated as fixed rules.

Third, decomposed objective models may create a false sense of precision. Some objectives can be estimated with useful empirical signals, while others are open-textured, context-dependent, or difficult to measure directly. Numerical scores should therefore be treated as evidence for decision-making, not as complete definitions of safety, truthfulness, fairness, autonomy, or stakeholder impact.

Fourth, context-to-objective routing may itself become a brittle proxy. A router that performs well on known categories may fail under novel tasks, adversarial prompts, ambiguous user intent, or new tool environments. The proposed framework therefore shifts part of the alignment problem from final response generation to objective-structure identification. This shift is necessary, but it does not eliminate the need for robust evaluation, auditing, and revision.

Finally, the paper focuses primarily on frontier AI assistants and agentic systems. Broader institutional questions, including who has authority to define objectives, how affected stakeholders can contest them, and how objective procedures should be revised across social and cultural contexts, remain underdeveloped. These questions are part of contextual objective regulation and require further work beyond the model-level framework developed here.

\section{Conclusion}
\label{sec:conclusion}

Modern AI progress has relied on powerful reductions: language to next-token prediction, games to win conditions, code to tests, mathematical reasoning to verifiable answers, and instruction following to preference optimization. These reductions remain indispensable when objectives are clear, stable, measurable, and legitimately tradeable. However, frontier AI systems increasingly operate in settings where the relevant objective is ambiguous, context-dependent, delayed, or only partially observable.
This paper argues that such settings should be understood as contextual multi-objective optimization problems. The central challenge is not only whether a system can optimize, but whether it can identify which objectives matter in the current context, which constraints should override ordinary preferences, whose interests are affected, what uncertainty remains, and whether the appropriate action is to answer, clarify, refuse, disclose uncertainty, request confirmation, escalate, use a tool, or refrain from acting.
The aim is not to replace scalar optimization, but to clarify when it is valid, when it breaks down, and what additional choice procedures are needed for objective-irreducible tasks. As frontier systems become more capable, personalized, and agentic, this shift from reward maximization to operational judgment becomes central to building systems that are not only powerful, but contextually appropriate, corrigible, and accountable.

\bibliographystyle{unsrt}
\bibliography{ref}

\end{document}